\definecolor{maroon}{rgb}{0.55,0,0}
\theoremstyle{plain}
\newtheorem{theorem}{Theorem}
\newtheorem{theorem*}{Theorem}
\newtheorem{proposition}{Proposition}
\newtheorem*{proposition*}{Proposition}
\theoremstyle{definition}
\theoremstyle{remark}
\title{Decorrelating neurons using persistence}
\author{%
  Rubén Ballester\\
  Departament de Matemàtiques i Informàtica\\
  Universitat de Barcelona\\
  Gran Via de les Corts Catalanes, 585, 08007 Barcelona, Catalonia, Spain \\
  \texttt{ruben.ballester@ub.edu} \\
  \And
  Carles Casacuberta \\
  Departament de Matemàtiques i Informàtica\\
  Universitat de Barcelona\\
  Gran Via de les Corts Catalanes, 585, 08007 Barcelona, Catalonia, Spain \\
  \texttt{carles.casacuberta@ub.edu} \\
  \And
  Sergio Escalera\\
  Departament de Matemàtiques i Informàtica\\
  Universitat de Barcelona\\
  Gran Via de les Corts Catalanes, 585, 08007 Barcelona, Catalonia, Spain \\
  \texttt{sergio.escalera@ub.edu} \\
}
\begin{document}

\maketitle

\begin{abstract}  
We propose a novel way to improve the generalisation capacity of deep learning models by reducing high correlations between neurons. 
For this, we present two regularisation terms computed from the weights of a minimum spanning tree of the clique whose vertices are the neurons of a given network (or a sample of those), where weights on edges are correlation dissimilarities.
We provide an extensive set of experiments to validate the effectiveness of our terms, showing that they outperform popular ones. Also, we demonstrate that naive minimisation of all correlations between neurons obtains lower accuracies than our regularisation terms, suggesting that redundancies play a significant role in artificial neural networks, as evidenced by some studies in neuroscience for real networks. We include a proof of differentiability of our regularisers, thus developing the first effective topological persistence-based regularisation terms that consider the whole set of neurons and that can be applied to a feedforward architecture in any deep learning task such as classification, data generation, or regression.
\end{abstract}

\section{Introduction}
Neural networks have proven to be powerful models to solve complex tasks. Usual neural networks show a high capacity to generalise properly beyond the training dataset used to fit their parameters~\citep{undestanding_requires_rethinking_generalisation}. Although there is no general explanation of why this happens yet, abundant literature is available to tackle this problem~\cite{in_search_of_robust_measures_of_generalisation, on_measuring_excess_capacity_in_neural_networks, pgdlcompetition, Jiang2020Fantastic, Kawaguchi_2022}. Moreover, many regularisation methods have been proposed to improve generalisation capacity from both theoretical and practical perspectives. According to experimental results, explicit regularisation may improve generalisation performance~\cite{undestanding_requires_rethinking_generalisation}. 

Evidence from neuroscience indicates that correlation among neurons is a significant factor in the brain's ability to encode and process information~\cite{cohenkohn2011,kohnsmith2005}. This was pointed out in~\cite{NEURIPS2020_f48c04ff}, where it was observed that the generalisation error of a deep network is monotonic with respect to the correlation between weight matrices of neurons or filters, suggesting that decreasing this correlation can be beneficial to improve the generalisation capacity of a network. Furthermore, in~\cite{cogswell2016reducing}, overfitting of neural networks was reduced by decorrelating their neuron activations. Overall, these studies suggest that a reduced correlation between neuron activations could improve the robustness of a network. 

However, neuroscience also suggests that redundancy appears naturally in brain circuits and is useful to perform certain computations~\cite{Hennig2018-ye, MIZUSAKI202174}. For this reason, aggressively minimising correlations between all activations or weights may be detrimental for the performance of a neural network.

In this work, we propose a way to minimise only the most relevant high correlations between neurons. 
For each batch of data during training, we compute regularisation terms based on edge weights of a minimum spanning tree of the clique generated by the most relevant neurons for the batch, based on an importance measure inspired by the activation criterion for neural network pruning presented in~\cite{molchanov2017pruning}. Edge weights in the clique are pairwise correlations between activation vectors of neurons.
In order to prove that our regularisation terms are almost everywhere differentiable,
we use the differential calculus framework for persistent homology developed in~\cite{Leygonie2022}. 

\subsection{Contributions}

The main contributions of our work can be described as follows:

\begin{enumerate}
  \item We propose two novel regularisation terms that minimise only some of the highest correlations of the most relevant neurons in a specific training batch. This approach allows for some redundancy in the neural network, unlike other articles such as~\cite{cogswell2016reducing, NEURIPS2020_f48c04ff}, which propose to minimise all pairwise correlations. 
  Each regularisation term employs a distinct method,
  and each of them outperforms the other in specific networks, thereby complementing each other.
  \item We use differentiable persistence descriptors to ensure differentiability of our regularisation terms, thus developing, to the best of our knowledge, the first topological regularisation terms that depend on the whole set of hidden internal representations of the neurons of a neural network.
  \item We provide an extensive set of experiments to validate the effectiveness of our topological regularisation terms. We also compare our regularisers with several popular regularisation terms and find that our regularisers achieve a better performance in the experiments.
  \end{enumerate}

The article is structured as follows. In Section~\ref{scn:related_work}, we analyse the current approaches to regularisation of neural networks using correlations and topological data analysis. In Section~\ref{scn:methodology}, we describe our topological regularisation terms and prove that they are differentiable almost everywhere. In Section~\ref{scn:results}, we describe the experiments performed to validate our regularisation terms and discuss their results. In Section~\ref{scn:limitations_future_work}, we discuss limitations of our approach and possible future work. Section~\ref{scn:conclusions} contains our conclusions, and basic facts about differentiability of persistence descriptors are detailed in the Appendix.

\section{Related work}\label{scn:related_work}
Many works in deep learning study how to apply explicit regularisation to improve the generalisation capacity of neural network models. Popular approaches include dropout~\cite{dropout_original_paper}, in which neurons of neural networks are dropped randomly during training, and the classical $l_1$ and $l_2$ regularisation terms~\cite{classical_l1_l2}, that control the size of  weights of a neural network. However, many regularisation approaches have been developed in the literature. Among them, some have used correlations between weights and activations of neurons to regularise neural networks. In particular, \cite{NEURIPS2020_f48c04ff}
uses weight correlations for convolutional and fully-connected layers to improve deep network learning outcomes, and \cite{cogswell2016reducing} proposes to minimise a loss function computed from a covariance matrix of neuron activations on a batch.
Both methods, however, have limitations: 
  \begin{enumerate}
    \item In~\cite{NEURIPS2020_f48c04ff}, the definition of weight correlation is hardcoded only for fully-connected and convolutional layers and thus does not apply to some relevant modern architectures. 
    \item In~\cite{cogswell2016reducing}, correlations between neurons are computed for a set of neurons defined by the user. This leaves the practitioner with the need to choose the neurons to decorrelate, which is not an easy task 
    for large neural networks.
    \item Both articles~\cite{cogswell2016reducing, NEURIPS2020_f48c04ff} reduce all 
    correlations at the same time, without taking into consideration that redundancy between neurons can be important. 
    Also, the regularisation terms proposed in these articles use only layerwise correlations, without taking into account interactions between neurons of different layers.
  \end{enumerate}

We propose two regularisation terms that, first, can be used in any feedforward architecture and are not restricted to correlations in the same layer, and secondly, they only decorrelate the neurons with highest correlations, thus allowing the neural network to be flexible enough to keep an amount of redundancy that might be useful for the task.

During the last years, the popularity of topological methods in machine learning has rapidly increased. An overall survey of these methods can be found 
in~\cite{survey_topological_machine_learning}. 
Particularly interesting for this paper is the use of topological priors for regularising neural networks. In~\cite{pmlr-v139-carriere21a, Leygonie2022}, 
frameworks for differential calculus on persistence barcodes were defined, allowing to optimise point cloud shapes and thus to construct topological regularisation terms. 
Thanks to the possibility of differentiating persistence diagrams, several neural network layers have been created with the objective of leveraging topological properties in the process of learning~\cite{pmlr-v108-carriere20a, pmlr-v108-gabrielsson20a, deep_learning_with_topological_signatures, learning_representations_of_persistence_barcodes, 10.5555/3495724.3497063, reinauer2022persformer}. In~\cite{a_topological_regularizer_for_classifiers_via_persistent_homology}, the first regularisation term for neural networks based on the topology of the decision region was proposed. From there, specific topological regularisation terms have been discussed for image segmentation~\cite{a_persistent_homology_based_topological_loss_function_for_multi_class_CNN_segmentation_of_cardiac_MRI,a_topological_loss_function_for_deep_learning_based_image_segmentation_using_persistent_homology, topology_preserving_deep_image_segmentation, hu2021topologyaware}, autoencoder latent space~\cite{pmlr-v97-hofer19a}, and classification using decision boundaries~\cite{pmlr-v89-chen19g}. 

Among the current approaches on regularising neural networks, the most similar to our method are the ones suggested in~\cite{intrisinc_dimension_persistent_homology_generalisation_neural_networks, topologically_densified_distributions}. In~\cite{topologically_densified_distributions}, zero-dimensional persistent homology is used to optimise the mass concentration of the internal representations in the last hidden layer assuming that the mini-batches used during training are equally distributed among all classes. In~\cite{intrisinc_dimension_persistent_homology_generalisation_neural_networks}, an upper bound of the generalisation gap of a neural network $\mathcal N$ was found theoretically in terms of the persistent homology dimension of the continuous set of weights generated during the training of $\mathcal N$. The higher the dimension, the higher the upper bound on the generalisation gap, and thus it was proposed to regularise neural networks by adding a topological term consisting of an approximation of the persistent homology dimension of the set of weights generated during training.

Although the existing topological regularisation approaches perform satisfactorily when used appropiately, most of them are restricted to specific tasks or have limitations that can be complemented with our approach, that is fundamentally different from previous methods, for the following reasons:
\begin{enumerate}
  \item Topological regularisation terms based on decision boundaries or the evolution of the whole set of weights do not consider the whole structure of a network, that provides substantial information about the network's generalisation~\cite{computing_the_testing_error_without_a_testing_set, what_does_it_mean_to_learn_in_deep_networks, Kawaguchi_2022, neural_persistence}.
  \item Topological regularisation terms designed for specific tasks, such as segmentation, or specific models, such as autoencoders, are too restrictive in general scenarios.
  \item In \cite{topologically_densified_distributions} it is assumed that the batches during training have equally distributed samples among the different labels and can be applied only to classification tasks. For unbalanced datasets or non-classification tasks, such assumptions need not be satisfied.
\end{enumerate}

Our topological regularisation terms based on correlations of network neurons are agnostic to the problem (regression, classification, generative models, etc.) and compatible with most model types (CNNs, transformer-based, MLPs, etc.), and they work \emph{on the whole structure of the network} by means of neuron activations.

\section{Methodology}\label{scn:methodology}

The presence of high correlations between neurons in a neural network may imply the existence of redundant features learned from the data. This becomes particularly significant when two neurons, denoted as $\nu_1$ and $\nu_2$, exhibit a Pearson correlation coefficient equal to~$\pm 1$.
In such cases, there exist real numbers $a\neq 0$ and $b$ such that $\nu_1 = a\nu_2 + b$ with probability one. Consequently, one of the neurons can be eliminated, resulting in a smaller neural network that represents the same function as the original neural network almost everywhere. This observation suggests that excessively high correlations between neurons restrict the network's capacity to fully utilise its expressivity for learning the problem, highlighting the need to prevent such behaviour.

\begin{figure}[t!]
  \centering
  \includegraphics[width=0.7\textwidth]{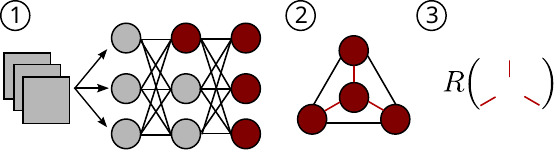}
  \caption{Sequence of steps needed to compute the proposed regularisation terms 
  $\mathcal{T}_1$ and $\mathcal{T}_2$.
  In the first step, the neural network is fed with the training batch and the activations for each neuron are computed. Then, a set of neurons per hidden layer
  (maroon nodes) is chosen. For the importance sampling algorithm described in the methodology, we take a fixed percentage of $0.5$\% of neurons per hidden layer
  and the whole set of neurons in the output layer. In the second step, the persistence diagram ---that is, a multiset of weights of the edges of a minimum spanning tree--- is computed (red edges).
  Finally, in the third step, a suitable real-valued function $\mathcal R$ from the weights of the minimum spanning tree is computed. By fixing batch examples, the computation of the real value given by $\mathcal{R}$ using the previous steps is differentiable almost everywhere with respect to neural network parameters due to the chain rule and Theorem~\ref{thm:differentiability_regularisation_terms}.}
  \label{fig:graph}
\end{figure}
However, we conjecture that fully avoiding correlated features may be detrimental for learning tasks. This is because
\begin{enumerate*}[label={(\arabic*)}]
  \item it imposes hard restrictions to the weights of neural networks during training, and
  \item there is evidence from neuroscience 
  suggesting that correlation is beneficial in brain
  operation~\cite{Hennig2018-ye, MIZUSAKI202174}.
\end{enumerate*}
In this paper, we propose to generate a \textit{balanced amount} of correlation between neurons by reducing only some of the highest ones. To do this, we use dimension zero persistent homology to build two regularisation terms that work in a complementary way. 

For a thorough introduction to persistent homology, we refer the reader to the book~\cite{edelsbrunner2022computational} and to the survey~\cite{survey_topological_machine_learning}. 
To a finite set $X$ and a symmetric function $d\colon X\times X\to\mathbb R_{\geq 0}$ such that $d(x,x)=0$ for all $x\in X$ one associates a \emph{persistence diagram} in every homological dimension greater than or equal to zero as described e.g.\ in~\cite{edelsbrunner2022computational} (details are also given in the Appendix). 

In this work we only use persistent homology in dimension zero. For this reason, since points in a zero-dimensional persistence diagram are aligned along the positive $y$ axis, we only focus on their $y$-coordinates. Hence we associate to each pair $(X,d)$ as above a finite multiset $D(X,d)$ of positive real numbers, ignoring the point at infinity. Such numbers are the weights of the edges of a minimum spanning tree of the undirected weigthed graph 
$(V,E,w)$ with $V=X$ and $E=\left\{\{u,v\}:u,v\in V\right\}$, with weights $w(\{u,v\})=d(u,v)$. This graph is a \emph{clique} (or a \emph{complete graph}) since every pair of vertices is joined by an edge, and a \emph{minimum spanning tree} (MST) is a subgraph without cycles containing all the vertices with the minimum possible total edge weight.

Given a zero-dimensional persistence diagram $D(X,d)$, we use the framework for differential calculus of~\cite{Leygonie2022} to build differentiable regularisation terms depending on $D(X,d)$. In particular, we are interested in maximizing the overall values of persistence diagrams for pairs $(X,d)$ where $X$ represents a set of neurons and $d$ is a dissimilarity function measuring correlations between them. 

Let us denote by
$\mathcal{N}\colon\mathcal X\to\mathcal Y$
a feedforward neural network, and let $\mathcal{D}=\left\{(x_1, y_1),\hdots, (x_n, y_n)\right\}\subseteq 
\mathcal X \times \mathcal Y$ 
be a dataset ---in our case, batches of the training dataset. Let $G_\mathcal{N}$ be the clique graph of the network~$\mathcal{N}$, whose vertices are the neurons of~$\mathcal{N}$.
As we do not know the marginal distribution of the data on~$\mathcal{X}$, we approximate the correlation between two neurons seen as random variables $u,v\colon\mathcal X\to\mathbb R$ using the sample correlation for the neuron activations in the dataset $\mathcal{D}$ restricted to the inputs. Recall that the \emph{sample correlation} of two vectors $x,y\in \mathbb R^n$ is defined as
\begin{equation*}\label{def:sample_correlation}
  \text{corr}(x,y)=\frac{\sum_{i=1}^n(x_i-\bar{x})(y_i-\bar{y})}{\sqrt{\sum_{i=1}^n (x_i - \bar{x})^2}\sqrt{\sum_{i=1}^n (y_i - \bar{y})^2}},
\end{equation*}
where $\bar{x}=\frac{1}{n}\sum_{i=1}^n x_i$ and $\bar{y}=\frac{1}{n}\sum_{i=1}^n y_i$. Thus, the sample correlation between two neurons $u$ and $v$ given $\mathcal{D}$ is $\text{corr}_\mathcal D(u,v) \triangleq \text{corr}\left(u(\mathcal{D}), v(\mathcal{D})\right)$ where $u(\mathcal{D})\triangleq(u(x_1),\hdots, u(x_n))$ and $v(\mathcal{D})\triangleq (v(x_1),\hdots, v(x_n))$.

In our case, neurons are considered to be similar when they share a large correlation in absolute value. Since correlations take values between $-1$ and $1$,
we define the \emph{correlation dissimilarity} between neurons as a function $d\colon V(G_\mathcal{N})\times V(G_\mathcal{N})\to [0,1]$ given by $d(u,v)=1-\left|\text{corr}_\mathcal D(u,v)\right|$. Given
this dissimilarity function, we can study 
correlations between any subset of neurons $V'\subseteq V(G_\mathcal{N})$ by means of the persistence diagram $D(V', d)$.  

By the cut property of minimum spanning trees, each MST of a clique contains, for each of its vertices, at least one edge with the minimum weight among its incident edges. In our case, this is translated into the fact that the diagram $D(V', d)$ contains 
the value 
\begin{equation*}
    1-\max_{u,v\in V',\,u\neq v}\left|\text{corr}_\mathcal D(u,v)\right|
\end{equation*}
among, possibly, 
other high correlations 
to form the MST. Therefore, by maximising the values of $D(V',d)$, we are in fact minimising a set of correlations between neurons in $V'$ containing the highest correlations achieved by neurons in the set.

Current neural networks contain an enormous quantity of neurons and computing a MST of the 
weighted clique
$(V(G_\mathcal{N}), d)$ is not feasible in many cases, as computing a minimum spanning tree has a complexity of $\mathcal{O}(e\cdot\alpha(e,v))$~\cite[Theorem~1.1]{mst_connected_graph}, where $\alpha(e, v)$ is the functional inverse of Ackermann's function~\cite{ackermann_function} and $e$ and $v$ are the number of edges and vertices, respectively, with $e=\binom{v}{2}$ because the graph is a clique. 

For this reason,
we consider, for each batch $\mathcal B=\{(x_1, y_1), \hdots, (x_{\left|\mathcal B\right|}, y_{\left|\mathcal B\right|})\}$ during training, a subset of neurons $V_\mathcal{B}\subseteq V(G_\mathcal{N})$ that may have 
smaller cardinality than $V(G_\mathcal{N})$. In particular, for the cases in which we cannot set $V_\mathcal{B}=V(G_\mathcal{N})$, we sample $V_\mathcal{B}$ using an \emph{importance sampling algorithm} for each batch. 
We take the top 
percentage $P$ of most important neurons of each layer given the batch, except for the last layer, where we take all the neurons, where $P$ is a hyperparameter depending on the size of the neural network. This is because the last layer showed to contain relevant information with respect to generalisation in other works like~\cite{topological_approaches_to_deep_learning}. For our experiments with big neural networks, we set $P$ to $0.5$\% due to practical hardware limitations. 

The \emph{importance} of a neuron given a batch is set to the average quantity of absolute activation achieved by the neuron, and it is inspired 
by the activation criterion for pruning presented in~\cite[Section~2.2]{molchanov2017pruning}. The higher this value for a neuron is, the more relevance we allot to the neuron. More precisely, 
denote $\bar{v}_\mathcal B =\left|\mathcal B\right|^{-1}\sum_{i=1}^{\left|\mathcal B\right|}\left|v(x_i)\right|$ and let $V(G_\mathcal{N})_l=\{v_1,\hdots, v_{n_l}\}$ be the neurons of the $l$-layer of $\mathcal{N}$ in any descending order of their 
$\bar{v}_\mathcal B$ values. 
In our case, we use the order given by the \texttt{argsort} function of TensorFlow. Let
\begin{equation*}
  V_\mathcal{B}=\bigcup_{l}V_{\mathcal{B}, l}
  \quad \text{where}
  V_{\mathcal B, l}=\begin{cases}
    \{v_1,\hdots, v_{\lfloor 0.005
    \, n_l\rfloor}\}\text{ if }l \neq L,\\[0.1cm]
    V(G_\mathcal{N})_l\text{ otherwise},
  \end{cases}
\end{equation*}
where $l$ iterates over all possible layers of $\mathcal N$ and $L$ is the number of the last layer.

Recall that maximising the values of $D(V', d)$ is equivalent to minimising a set of high correlations between neurons in~$V'$. By convention, we assume that regularisation terms are minimised by network training algorithms, so to maximise a function $f(\theta)$ we minimise its
opposite function $-f(\theta)$. We propose two regularisation terms that maximise persistence diagram values in different ways:
\begin{multicols}{2}
\noindent
\begin{equation}\label{eq:total_persistence}
\hspace{2.7cm}
\mathcal T_1(\theta) \triangleq -\! \!\!\!\!\!\sum_{y\in D(V_\mathcal B, d)}\!\!\!\!\!\!y, 
\end{equation}\columnbreak
\begin{equation}\label{eq:std_avg_regularisation_term}
\!\!\!\!\!\mathcal T_2(\theta) \triangleq -\alpha\bar{D}(V_\mathcal B, d)  + \beta\sigma(D(V_\mathcal B, d)),
\end{equation}
\end{multicols} 
\vspace{-0.5cm}
where $\alpha,\beta\in\mathbb R_{\geq 0}$ are weight parameters, $\theta$ is the set of parameters of the neural network being trained, and 
\begin{equation*}
    \bar{D}(V_\mathcal B, d)=\frac{1}{\left|D(V_\mathcal B, d)\right|}\sum_{y\in D(V_\mathcal B, d)}\!\!\!\!\!\!y,\qquad
    \sigma^2(D(V_\mathcal B, d))=\frac{1}{\left|D(V_\mathcal B, d)\right|}\sum_{y\in D(V_\mathcal B, d)}\!\!\!\!\!(y-\bar{D}(V_\mathcal B, d))^2
  \end{equation*}
are the mean and variance of the total persistence of $D(V_\mathcal B, d)$.
The regularisation term $\mathcal{T}_1$ given by Equation~\eqref{eq:total_persistence} maximises the sum of 
values in the persistence diagram, while the regularisation term $\mathcal{T}_2$ given by Equation~\eqref{eq:std_avg_regularisation_term} is a more involved term that focuses on how the entries of the persistence diagram are distributed, minimising their dispersion and maximising their average value.
In our case, we pick
$\alpha=\beta=1/2$ 
since we treat mean and dispersion with the same strength. Further exploration and optimisation of these hyperparameters is left for future work.

\newpage

\begin{theorem}\label{thm:differentiability_regularisation_terms}
Let $c,n\geq 2$ and let $d\colon\mathbb R^{n}\times\mathbb R^{n}\to[0,1]$ be 
given by $d(x,y)=1-\left|\textnormal{corr}(x,y)\right|$
where
$\textnormal{corr}$ 
denotes correlation.
There exists an open dense subset $\mathfrak D_{c,n}\subseteq \mathbb R^{cn}$ such that
the functions
\[      \mathcal{T}_1(x_1,\hdots,x_c) \triangleq -\!\!\!\!\!\!\sum_{y\in D(X,d)}\!\!\!\!\!y,
\qquad  \mathcal{T}_2(x_1,\hdots,x_c)  \triangleq -\alpha\bar{D}(X, d)  + \beta\sigma(D(X, d))
\]
are $\mathcal{C}^\infty$ on $\mathfrak D_{c,n}$ for all $\alpha,\beta\in\mathbb{R}_{\ge 0}$,
where $X=\{x_1,\hdots,x_c\}$ and $\bar{D}(X, d)$ and $\sigma(D(X, d))$ 
denote average and standard deviation, respectively, of the zero-dimensional persistence diagram $D(X, d)$.
\end{theorem}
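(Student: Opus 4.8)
The plan is to separate the three potential sources of non-smoothness of the map $(x_1,\dots,x_c)\mapsto D(X,d)$ — the indeterminacy $0/0$ in the correlation coefficient, the kink of $|\,\cdot\,|$ at the origin, and the combinatorial jumps of the minimum spanning tree — and to take $\mathfrak D_{c,n}$ to be the complement of a nowhere dense set on which one of these occurs. Write $\bar x_i=n^{-1}\sum_k x_{i,k}$, $N_{ij}=\sum_{k=1}^n(x_{i,k}-\bar x_i)(x_{j,k}-\bar x_j)$ and $S_i=\sum_{k=1}^n(x_{i,k}-\bar x_i)^2$, so that $\textnormal{corr}(x_i,x_j)=N_{ij}/\sqrt{S_iS_j}$ whenever $S_i,S_j\neq 0$. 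I would define $\mathfrak D_{c,n}$ as the set of tuples for which: (i) $S_i\neq 0$ for every $i$, i.e.\ no $x_i$ is constant, so all pairwise correlations are defined; (ii) $N_{ij}\neq 0$ for all $i\neq j$, so $\textnormal{corr}(x_i,x_j)\neq 0$ and hence $|\textnormal{corr}(x_i,x_j)|$ coincides on a neighbourhood with $\pm\textnormal{corr}(x_i,x_j)$; (iii) the $\binom c2$ numbers $d(x_i,x_j)$, $i<j$, are pairwise distinct; and (iv) the $x_i$ are pairwise distinct (automatic from (iii) when $c\geq 3$; the set $\{x_i=x_j\}$ is a proper affine subspace in general).

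Openness and density of $\mathfrak D_{c,n}$ come next. On the open set where (i) holds, the equality $|\textnormal{corr}(x_i,x_j)|=|\textnormal{corr}(x_k,x_l)|$ is, after clearing the nonzero denominators, equivalent to the vanishing of the polynomial $Q_{ij,kl}\triangleq N_{ij}^2\,S_kS_l-N_{kl}^2\,S_iS_j$; hence $\mathfrak D_{c,n}$ is an intersection of $\{S_i\neq 0\ \forall i\}$ with the open sets $\{N_{ij}\neq 0\}$ ($i<j$), $\{Q_{ij,kl}\neq 0\}$ ($\{i,j\}\neq\{k,l\}$) and $\{x_i\neq x_j\}$ ($i<j$), so it is open. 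For density it is enough that each excluded set has empty interior: $\{x_i\text{ constant}\}$ and $\{x_i=x_j\}$ are proper affine subspaces, and $\{N_{ij}=0\}$, $\{Q_{ij,kl}=0\}$ are zero sets of polynomials that I would check are not identically zero by exhibiting witnesses — e.g.\ $x_i=x_j$ non-constant gives $N_{ij}=S_i>0$, and, according to whether $\{i,j\}$ and $\{k,l\}$ are disjoint or share one index, a short choice of vectors with prescribed pairwise correlations produces a point with $Q_{ij,kl}\neq 0$. A finite union of nowhere dense sets being nowhere dense, the complement of $\mathfrak D_{c,n}$ is nowhere dense and $\mathfrak D_{c,n}$ is dense.

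Now fix $p_0\in\mathfrak D_{c,n}$. Since at $p_0$ the $\binom c2$ weights $d(x_i,x_j)$ are pairwise distinct, their strict ordering is preserved on a neighbourhood $U\subseteq\mathfrak D_{c,n}$, and Kruskal's algorithm, having no ties to break, selects on all of $U$ the same edge set $E(T)\subseteq\binom{[c]}{2}$ of the minimum spanning tree, which is moreover unique because the weights are distinct. The zero-dimensional persistence diagram of the filtration on $c$ points has exactly $c-1$ finite points, the clique being connected, and these are precisely the MST edge weights; hence on $U$ one has the multiset identity $D(X,d)=\{\,d(x_a,x_b):\{a,b\}\in E(T)\,\}$, i.e.\ $D(X,d)$ is obtained from $(x_1,\dots,x_c)$ by a fixed selection of $c-1$ coordinates composed with $d$.

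It remains to assemble smoothness and handle the square root in $\sigma$. On $\mathfrak D_{c,n}$ each $\textnormal{corr}(x_a,x_b)=N_{ab}/\sqrt{S_aS_b}$ is $\mathcal C^\infty$ as a ratio of polynomials with non-vanishing denominator, and by (ii) so is each $d(x_a,x_b)=1-|\textnormal{corr}(x_a,x_b)|$; therefore $\mathcal T_1=-\sum_{\{a,b\}\in E(T)}d(x_a,x_b)$, $\bar D(X,d)=\tfrac1{c-1}\sum_{\{a,b\}\in E(T)}d(x_a,x_b)$ and the variance (a polynomial in these) are $\mathcal C^\infty$ on $U$, hence on all of $\mathfrak D_{c,n}$. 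The only term where differentiability could fail is $\sigma(D(X,d))=\sqrt{\mathrm{var}}$, at points where $\mathrm{var}=0$: for $c=2$ the diagram is a single point and $\sigma\equiv 0$, while for $c\geq 3$ condition (iii) forces the $c-1\geq 2$ MST weights to be pairwise distinct, so $\mathrm{var}>0$ everywhere on $\mathfrak D_{c,n}$ and $\sqrt{\,\cdot\,}$ is $\mathcal C^\infty$ there. Either way $\mathcal T_2=-\alpha\bar D(X,d)+\beta\,\sigma(D(X,d))$ is $\mathcal C^\infty$ on $\mathfrak D_{c,n}$ for every $\alpha,\beta\geq 0$. I expect the main obstacle to be the bookkeeping in the density step — verifying that the polynomials $Q_{ij,kl}$ are not identically zero across the different overlap patterns of the two index pairs — whereas the local constancy of the MST and the treatment of the square root, though essential, are technically short.
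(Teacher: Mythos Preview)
Your argument is correct and arrives at essentially the same open dense set as the paper (your conditions (i) and (ii) together amount to the paper's requirement that all pairwise covariances, including the variances, be nonzero, and (iii) is the paper's ``pairwise distinct distances'' condition), but the route is genuinely different. The paper factors the regularisers as $\mathcal T_i=\tilde{\mathcal T}_i\circ B_0$ through the space $\mathbf{Diag}$ of persistence diagrams and invokes the differential calculus of Leygonie et~al.: first \cite[Theorem~4.7]{Leygonie2022} gives $\infty$-differentiability of the barcode map $B_0$ (via the observation that equal preorders on simplices persist on a neighbourhood), then the functions $\tilde{\mathcal T}_i\colon\mathbf{Diag}\to\mathbb R$ are shown to be $\infty$-differentiable in the sense of that framework, and the composition rule \cite[Proposition~3.14]{Leygonie2022} yields the result. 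You bypass this machinery entirely: once the edge weights are pairwise distinct, Kruskal's algorithm selects a fixed edge set $E(T)$ on a whole neighbourhood, so locally $D(X,d)$ is the image of a \emph{fixed} coordinate projection followed by the smooth map $d$, and $\mathcal T_1,\mathcal T_2$ are then visibly smooth functions of the $c-1$ numbers $d(x_a,x_b)$, $\{a,b\}\in E(T)$. Your approach is more elementary and self-contained (no need to define differentiability on $\mathbf{Diag}$ or to quote external results); the paper's approach is more modular and would transfer verbatim to higher homological degrees or to other descriptors on $\mathbf{Diag}$, which is irrelevant here. Your density argument via non-vanishing of the polynomials $Q_{ij,kl}$ is more explicit than the paper's one-line remark that squared correlation is rational, and your treatment of the square root is slightly more careful: you note that for $c=2$ the diagram is a single point so $\sigma\equiv 0$, whereas the paper's appendix asserts ``at least two different points'' without isolating this case.
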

A proof of this result is provided in the Appendix.

Using the chain rule, our regularisation terms are well defined as soon as the neuron activations of the set of neurons $V_\mathcal B=\{\nu_1,\hdots,\nu_c\}$ in the batch $\mathcal B=\left\{x_1,\hdots,x_n\right\}$ form a vector 
\[
\mathbf{v}=(\nu_1(x_1),\hdots,\nu_1(x_n)),\hdots,(\nu_c(x_1),\hdots, \nu_c(x_n))\in\mathbb R^{cn}
\]
such that $\mathbf{v}\in\mathfrak D_{c,n}$ and such that the neuron activations are obtained in a differentiable way. Experimentally, we need not control when this vector $\mathbf{v}$ is inside $\mathfrak D_{c,n}$ thanks to the fact that $\mathfrak D_{c,n}$ is a dense set.
However, we note that ignoring points where non-differentiability may occur in the domain could introduce errors in some iterations during training, as it may also happen with ReLU~\cite{relu_derivative_zero}.

\section{Results}\label{scn:results}
In this section, we first describe the experimental setup and the computational resources that we use to validate the hypothesis stated in the previous section. This is done in Subsection~\ref{scn:experiments}. Then, we present and discuss the results in Subsection~\ref{scn:results_analysis}. The code used to perform these experiments is attached as supplementary material \footnote{\url{https://github.com/rballeba/DecorrelatingNeuronsUsingPersistence}}.

\subsection{Experimental setup}\label{scn:experiments}
We prove the plausibility of the hypotheses that we formulated in Section~\ref{scn:methodology} by making two blocks of experiments in which we train several neural networks with different regularisation terms, including our proposed ones, and without regularisation terms. For each block, we train several models following a common architecture: multilayer perceptron models for the first block and VGG-like models for the second one. The networks of the first block are trained in the MNIST dataset whereas the networks of the second one are trained in CIFAR-10. In both blocks, we explore the same set of weights for the regularisation terms. Finally, to compare the accuracies of our proposed regularisation terms to the other alternatives, we use the Friedman statistical test with its Nemenyi post-hoc. Details of the experiments, training procedures, and evaluation methods are provided through this section.

\medskip

\textbf{Multilayer perceptron experiments}
\nopagebreak

In the first block of experiments, we examine our regularisation terms in a simplified problem. We train three different multilayer perceptron architectures with $1000$ hidden neurons, labelled \texttt{0}, \texttt{1}, and \texttt{2} using the MNIST dataset~\cite{lecun2010mnist}. Networks \texttt{0} and \texttt{1} share the same fully connected architecture. However, network \texttt{1} is trained using dropout with a $50$\% probability of dropping a hidden neuron at each iteration. Specifically, architectures \texttt{0} and \texttt{2} have a trapezium shape consisting of a sequence of hidden layers of $450$, $350$, and $200$ neurons for the first network, and of $300$, $250$, $200$, $150$, and $100$ for the second one, respectively.

\medskip

\textbf{PGDL experiments}
\nopagebreak

In the second block of experiments, the objective is to see if the method scales properly to more complex datasets and models. We train eight different neural network architectures from the PGDL dataset introduced in~\cite{pgdlcompetition} during NeurIPS 2020 competition track. The PGDL dataset is a collection of tasks where each task is composed by one dataset and a set of different neural network architectures trained with the dataset of the task.  The eight different neural network architectures we take belong to the first task, that is composed of VGG-like neural networks and the CIFAR10 dataset~\cite{krizhevsky2009learning}. The architectures we selected are the ones corresponding to the numbers \texttt{20}, \texttt{21}, \texttt{22}, \texttt{23}, \texttt{148}, \texttt{149}, \texttt{150}, and \texttt{151} from the dataset. Architectures \texttt{20}, \texttt{21}, \texttt{148}, and \texttt{149} are the same as the architectures \texttt{22}, \texttt{23}, \texttt{150}, and \texttt{151}, but with a layerwise dropout probability of $0.5$, respectively. The difference between models \texttt{22} and \texttt{23} is the width of their convolutions, where architecture \texttt{22} has convolution widths of $256$  and architecture \texttt{23} has convolution widths of $512$. Finally, the architectures \texttt{150} and \texttt{151} are the same as the architectures \texttt{22} and \texttt{23} but with one more dense layer.

\medskip

\textbf{Training procedures}
\nopagebreak

In these experiments, we train the different architectures with different regularisation terms weighted with several values. To train the neural networks in the different experiments, we replicate approximately the training performed by the PGDL dataset used in the second block of experiments.

For both blocks of experiments we split the data into training, validation, and test datasets. For the MNIST dataset, we split the original training dataset into new training and validation datasets with $80$\% and $20$\% of the original data, respectively. Finally, we use the original test dataset as test dataset. For the CIFAR10 dataset, we split the original training dataset into new training and validation datasets, where we choose 
$1000$ examples of each class randomly for the validation dataset and we place the remaining examples into the training dataset. Again, we reuse the original test dataset.

For the training procedure, we train for a maximum of $1200$ epochs with early stopping after $20$ epochs of no improvement, and with a batch size of $256$. The algorithm used for training is the usual stochastic gradient descent (SGD) with momentum $0.9$. For the first block of experiments, we use an adaptive learning rate $\alpha_i\triangleq \alpha_{0}\cdot (0.95)^{i/3520}$ where $i$ is the iteration where the learning rate $\alpha_i$ is used and $\alpha_0=0.01$. For the second block of experiments, we use a fixed learning rate of $0.001$, for which we obtained similar accuracies to the ones given by the original trainings of the PGDL neural networks. 

Let $\text{CCE}(\theta)$ denote the categorical cross entropy loss for a fixed neural network, clear from the context, with a set of parameters $\theta$. For each of the networks described before we perform several trainings with the different regularisation terms that we study, weighed by different values. In particular, each training minimises a loss function 
\begin{equation}\label{eq:loss_function_pgdl}
  \mathcal{L}(\theta) = \text{CCE}(\theta) + \omega \mathcal{R}(\theta),
\end{equation}
where $\omega\in\{10^{-6}, 10^{-5}, 10^{-4}, 0.001, 0.01, 0.1, 1. 5, 10, 100\}$ represents one of the possible weight values used in the experiments, and $\mathcal R(\theta)$ represents one of the regularisation terms. We also train the models without any regularisation term, i.e., with $\omega=0$. 

We use the full and sampled version of our regularisers $\mathcal{T}_1$ and $\mathcal{T}_2$ for the first and second blocks of experiments, respectively, due to the small size of the networks of the first block and to the large size of the networks of the second one. To see if reducing only some correlations between neurons is better than minimising all of them, we also study the regularisation term $\mathcal{C}(\theta)$ defined as
\begin{equation}\label{eq:term_minimising_everything}
  \mathcal{C}(\theta) =\frac{1}{|\mathfrak C|}\sum_{(x,y)\in\mathfrak C}|\text{corr}_{\mathcal{B}}(x,y)|,
\end{equation}
where $\mathfrak C = \left\{(x,y)\in V_\mathcal{B}\times V_\mathcal{B}: x\neq y \text{ and }\text{corr}_{\mathcal{B}}(x,y)\neq 0\right\}$, and $V_\mathcal{B}$ is defined as for the terms $\mathcal{T}_1$ and $\mathcal{T}_2$ in Section~\ref{scn:methodology}. Note that for the first block of experiments we consider all the non-input neurons and for the second block of experiments we perform the same sample of neurons due to the computational complexity of computing all the possible pairwise correlations for each iteration of the training. Finally, we also train the networks with the classic $l_1$ and $l_2$ regularisation terms \cite{classical_l1_l2}.

\medskip

\textbf{Evaluation procedure}
\nopagebreak

To evaluate the performances of the regularisation terms compared, we use a Friedman test with the Nemenyi post-hoc, as proposed in~\cite{significance_test}, and we report the test accuracies for each regularisation term and network. To obtain test accuracies, we choose, for each term and network, the weight that maximises the validation accuracy after training. Then, we compute the test accuracy using the selected weight. For the training procedures without regularisation terms, we compute directly their test accuracies.

\subsubsection{Resources used and computation}
The experiments were computed in a server with 503~GB of RAM, a CPU AMD EPYC 7452 32-Core Processor with a frequency up to 3.35~GHz, and seven GPUs NVIDIA GeForce RTX 3090 with 24~GiB of memory. The storage consisted of 3 Samsung SSDs, two of them with 3840~GB of memory and the other one with 960~GB. All the experiments were executed in parallel using one of the GPUs per experiment. The computational bottlenecks were related to the computation of correlation matrices of neurons and persistence diagrams. The first process was done using TensorFlow (in GPU mode) and the second one was performed using the library \texttt{giotto-ph}~\cite{burella2021giottoph}.

\subsection{Results and analysis}\label{scn:results_analysis}
\begin{table}
  \caption{Nemenyi $p$-value matrix. Each row  
  represents a different regularisation term except for the first row, 
  that represents 
  training without regulariser. Each cell is the $p$-value of the Nemenyi post-hoc test. The null-hypothesis of this test is that there is no difference between the accuracies yielded by the two training approaches. Those
  $p$-values lower than or equal to $0.05$ are bolded. $\mathcal{T}_1$:~First
  topological regularisation term as in~\eqref{eq:total_persistence};
  $\mathcal{T}_2$:~Second 
  topological regularisation term, as in~\eqref{eq:std_avg_regularisation_term}; 
  $l_1$:~Lasso regression regularisation term; $l_2$:~Ridge regression regularisation term; $\mathcal{C}$: Regularisation term minimising all the pairwise correlations in $V_\mathcal{B}$, as in~\eqref{eq:term_minimising_everything}; 
  $\emptyset$:~no regulariser.}
  \label{table:nemenyi_pgdl}
  \centering
\begin{tabular}{ccccccc}
\toprule
          & $\emptyset$ & $\mathcal{T}_1$ & $\mathcal{T}_2$ & $l_1$ & $l_2$ & $\mathcal{C}$\\
\midrule
 $\emptyset$   &       & \textbf{0.018} & \textbf{0.002} & 0.900       &   0.785 & 0.900        \\
 $\mathcal{T}_1$      &      &      & 0.900     & \textbf{0.036}      &   0.380   & 0.159    \\
 $\mathcal{T}_2$  & &  &       & \textbf{0.006} &   0.122 & \textbf{0.036}  \\
 $l_1$    &        &   &  &       &   0.900 & 0.900        \\
 $l_2$    &      &    &      &      &      & 0.900       \\
\bottomrule
\end{tabular}
\end{table}

\begin{table}
  \caption{Test accuracies for the different training procedures and networks. Each row represents a training with a different topological regularisation term except for the first row, that represents a training without a regularisation term. Each column represents a network in the PGDL dataset. Each cell is the test accuracy of the regularisation term with the weight that obtained the best validation accuracy for the term in the network specified by the column. Best accuracies per network are bolded. Rows $\mathcal{T}_1$, $\mathcal{T}_2$, $l_1$, $l_2$, $\mathcal{C}$, and $\emptyset$ are described in Table~\ref{table:nemenyi_pgdl}.}
\label{table:test_accuracies_pgdl}
  \centering
  \begin{tabular}{cccccccccccc}
    \toprule
    &    \multicolumn{3}{c}{MNIST and MLP} & \multicolumn{8}{c}{PGDL and VGG-like}\\
      \cmidrule(r){2-4}  \cmidrule(l){5-12}
           & 0 & 1 & 2   &     20 &     21 &     22 &     23 &    148 &    149 &    150 &    151 \\
    \midrule
     $\emptyset$ & \textbf{0.929} & 0.501  & 0.636 & 0.681 & 0.680 & 0.685 & 0.682 & 0.672 & 0.677 & 0.675 & 0.680 \\
     $\mathcal T_1$  &0.928 & \textbf{0.547} & \textbf{0.883}    & 0.687  & \textbf{0.705}  & 0.675  & 
     0.700    & 0.688  & \textbf{0.704}  & 0.678  & \textbf{0.698}  \\
     $\mathcal T_2$ & 0.923 & 0.540 &   0.879  & \textbf{0.691}  & 0.701  & \textbf{0.688}  & \textbf{0.706}  & \textbf{0.689}  & 0.698  & \textbf{0.688}  & 0.695  \\
     $l_1$ & 0.914 & 0.536  & 0.870   & 0.682  & 0.680   & 0.682  & 0.683  & 0.677  & 0.675  & 0.685  & 0.678  \\
     $l_2$ & 0.919 & 0.531 & 0.878  & 0.681  & 0.688  & 0.686  & 0.683  & 0.680   & 0.680   & 0.681  & 0.679  \\
     $\mathcal{C}$ & 0.923 & 0.530 & 0.881 & 0.679  & 0.687  & 0.680  & 0.686  & 0.678  & 0.69  & 0.683  & 0.674  \\
    \bottomrule
    \end{tabular}
\end{table}

Table~\ref{table:test_accuracies_pgdl} contains test accuracies for each network in the MNIST and PGDL experiments for the different regularisation terms and the training procedure without regularisation term. For each regularisation term and network, we compute the test accuracy for the weight of the regularisation term that obtained the best validation accuracy for the specific network. Using these accuracies, the Friedman test with null hypothesis that all the training algorithms
are equivalent~\cite{significance_test} 
gives a $p$-value of $1.04e-05 < 0.05$, so we reject the null hypothesis. Therefore, we perform the Nemenyi post-hoc test, obtaining the $p$-value matrix shown in Table~\ref{table:nemenyi_pgdl}.

When comparing test accuracies individually, $\mathcal{T}_1$ and $\mathcal{T}_2$ outperform the other training methods for all the networks except for the model \texttt{0}.  Concretely, $\mathcal{T}_1$ and $\mathcal{T}_2$ obtain the best test accuracies in five of the networks each, respectively, out of eleven total networks. Due to this, there is no winner between the two proposed regularisation terms. This is contrasted by the Nemenyi matrix, where the $p$-value between both methods has a high value of $0.900$. Also, both of them are significantly better than training without regularisation term, according to the Nemenyi $p$-value matrix, obtaining $p$-values of $0.018$ and $0.002$ for $\mathcal{T}_1$ and $\mathcal{T}_2$, respectively.

Regarding the differences between minimising all the pairwise correlations in the set of relevant neurons and minimising only the highest ones, i.e., the differences between the regularisation terms $\mathcal{T}_1$, $\mathcal{T}_2$, and $\mathcal{C}$, we see that both regularisation terms obtain low $p$-values in the Nemenyi test. In particular, for $\mathcal{T}_2$ and $\mathcal{C}$ we obtain a significant low $p$-value of $0.036$, that validates the hypothesis that minimising only the highest correlations is better than minimising all the correlations for a sampled set of relevant neurons in our experiments. 

Concerning the classical regularisation terms, both regularisation terms $\mathcal{T}_1$ and $\mathcal{T}_2$ obtain $p$-values lower than or equal to $0.05$ with respect to $l_1$, making our regularisation terms significantly better than $l_1$. As for $l_2$, the $p$-values of $\mathcal{T}_1$ and $\mathcal{T}_2$ with respect to $l_2$ are much lower than the other $p$-values for $l_2$, although not as low as for $l_1$. This, together to the fact that $l_2$ does not obtain the best accuracy for any network, suggests that our regularisation terms are probably better than $l_2$, although more experiments are needed to confirm this claim.

Overall, we see a trend in the results that $\mathcal{T}_1$ and $\mathcal{T}_2$ are superior to the other training approaches, which supports the hypothesis that we stated in the Methodology.

\section{Limitations and future work}\label{scn:limitations_future_work}
The computational cost of determining persistence diagrams, which is a crucial component of the pipeline to compute our proposed regularisation terms, can pose limitations in practical scenarios that require fast training or involve huge networks. To extend the applicability of our regularisation terms, an efficient algorithm for computing persistence diagrams must be developed. One approach is to leverage the computational power of GPUs, as done in the \texttt{Ripser++} package~\cite{zhang2020gpu}. However, \texttt{Ripser++} does not generate all the necessary information required for computing gradients in differentiable persistent homology, thus necessitating further development. Moreover, dimension zero persistent homology represents a unique case wherein persistence diagrams can be computed using any algorithm capable of computing minimum spanning trees of weighted graphs. Therefore, advancements in implementing these algorithms efficiently in a distributed or GPU-accelerated manner, as discussed in~\cite{gpu_mst} or~\cite{sanders2023engineering}, could be of considerable interest.

Other crucial bottleneck in our pipeline is the computation of pairwise correlations. With a large number of neurons, it is impractical to compute correlations for all neurons at each training step due to constraints in execution time and memory. We hypothesise that neuron selection greatly impacts the performance of regularisation terms, and thus, further study is needed. For large neural networks, a potential solution could involve computing regularisation terms within different clusters of neurons grouped by the proximity of their layers, as we hypothesise that \textit{distant} neurons have low correlation due to non-linear activations applied from one layer to the other.

The last limitation is related to the theoretical guarantees of our method. All our process is experimental, and we cannot guarantee that these regularisers work properly in any kind of learning task, despite neuroscience heuristic and experimental evidences. Further work to understand how correlations affect the generalisation capacity of neural networks is needed.
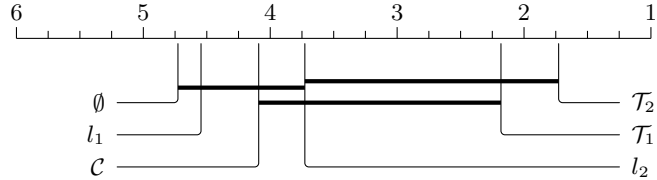
\begin{figure}[t!]
\centering
\begin{tikzpicture}[
  treatment line/.style={rounded corners=1.5pt, line cap=round, shorten >=1pt},
  treatment label/.style={font=\small},
  group line/.style={ultra thick},
]

\begin{axis}[
  clip={false},
  axis x line={center},
  axis y line={none},
  axis line style={-},
  xmin={1},
  ymax={0},
  scale only axis={true},
  width={\axisdefaultwidth},
  ticklabel style={anchor=south, yshift=1.3*\pgfkeysvalueof{/pgfplots/major tick length}, font=\small},
  every tick/.style={draw=black},
  major tick style={yshift=.5*\pgfkeysvalueof{/pgfplots/major tick length}},
  minor tick style={yshift=.5*\pgfkeysvalueof{/pgfplots/minor tick length}},
  title style={yshift=\baselineskip},
  xmax={6},
  ymin={-4.5},
  height={5\baselineskip},
  xtick={1,2,3,4,5,6},
  minor x tick num={3},
  x dir={reverse}
]

\draw[treatment line] ([yshift=-2pt] axis cs:1.7272727272727273, 0) |- (axis cs:1.2272727272727273, -2.0)
  node[treatment label, anchor=west] {$\mathcal{T}_2$};
\draw[treatment line] ([yshift=-2pt] axis cs:2.1818181818181817, 0) |- (axis cs:1.2272727272727273, -3.0)
  node[treatment label, anchor=west] {$\mathcal{T}_1$};
\draw[treatment line] ([yshift=-2pt] axis cs:3.727272727272727, 0) |- (axis cs:1.2272727272727273, -4.0)
  node[treatment label, anchor=west] {$l_2$};
\draw[treatment line] ([yshift=-2pt] axis cs:4.090909090909091, 0) |- (axis cs:5.2272727272727275, -4.0)
  node[treatment label, anchor=east] {$\mathcal{C}$};
\draw[treatment line] ([yshift=-2pt] axis cs:4.545454545454546, 0) |- (axis cs:5.2272727272727275, -3.0)
  node[treatment label, anchor=east] {$l_1$};
\draw[treatment line] ([yshift=-2pt] axis cs:4.7272727272727275, 0) |- (axis cs:5.2272727272727275, -2.0)
  node[treatment label, anchor=east] {$\emptyset$};
\draw[group line] (axis cs:1.7272727272727273, -1.3333333333333333) -- (axis cs:3.727272727272727, -1.3333333333333333);
\draw[group line] (axis cs:3.727272727272727, -1.5333333333333332) -- (axis cs:4.7272727272727275, -1.5333333333333332);
\draw[group line] (axis cs:2.1818181818181817, -2.0) -- (axis cs:4.090909090909091, -2.0);

\end{axis}
\end{tikzpicture}
  
  \caption{Critical difference diagrams~\cite{significance_test} for the Friedman and Nemenyi post-hoc statistical tests conducted in both blocks of experiments. The position of each training approach on the diagram corresponds to its average rank based on the test accuracies of the trained network ---see Table~\ref{table:test_accuracies_pgdl}. Lower ranks indicate that a regularisation term, or the training without regulariser, outperforms competitors with higher ranks. Regularisation terms are connected if the $p$-value obtained from the Nemenyi post-hoc test is greater than $0.05$. See Table~\ref{table:nemenyi_pgdl} for a description of the training approaches and the $p$-values.}
  \label{fig:cd_diag}
\end{figure}
\section{Conclusions}\label{scn:conclusions}
In this work, we introduced regularisation terms that minimise high correlations between the most important neurons given a training batch, by 
maximising the values of their zero-dimensional persistence diagram computed with the dissimilarity function $d(u,v)=1-\left|\text{corr}_\mathcal D(u,v)\right|$. Our regularisation terms outperformed classical regularisation terms and significantly improved the performance compared to minimising all pairwise correlations of 
important neurons in the MNIST and CIFAR10 datasets with MLP and VGG-like architectures, respectively. Additionally, we demonstrated that, when minimising 
higher correlations using persistent homology, several loss functions that are used with the same objective can yield different performances, suggesting that, for
differentiable persistent descriptors,
the choice of a loss function is a crucial step in the process. These findings support the hypothesis that neuron correlations play a crucial role in the generalisation capacity of neural networks, consistent with previous studies such as~\cite{cogswell2016reducing, NEURIPS2020_f48c04ff}. 

Our results also show that topological regularisation terms can be used to improve the performance of neural networks not only by considering the final representations of the data, but also by looking at the intermediate representations as well. This, along with other topological methods in deep learning, highlights the relevance of topological data analysis tools in understanding the behaviour of neural networks and the importance of
shape features of the data and the models on the generalisation capacity of neural networks. We hope that our work can encourage further research in this
direction, as well as the development of new topological methods for deep learning. In summary, our findings provide valuable insights into the role of topology and neuron correlations in deep learning and their potential for future
advances in the field.

\section*{Acknowledgements}
This work was supported by the Ministry of Science and Innovation of Spain through the research projects PID2022-136436NB-I00 and PID2020-117971GB-C22, and the Ministry of Universities of Spain through contract FPU21/00968.

{\small
\bibliographystyle{plainnat}
\bibliography{main}
}
%%%%%%%%%%%%%%%%%%%%%%%%%%%%%%%%%%%%%%%%%%%%%%%%%%%%%%%%%%%%
%\newpage

\appendix

\section{Differentiability of functions on persistence diagrams}

In this appendix, we prove
Theorem~\ref{thm:differentiability_regularisation_terms} using methods and results from~\cite{Leygonie2022}.
We consider finite ordered sets $X\subseteq \mathbb R^n$ 
with $c$ elements, where $n\ge 2$ and $c\ge 2$.
Each such set $X$ corresponds to an element $(p_1,\dots,p_c)\in\mathbb{R}^{cn}$, where $p_i\in\mathbb{R}^n$ denotes the $i$th point of~$X$.

Given points $p_1,\dots,p_c$ in $\mathbb{R}^n$, where $c\ge 2$, recall that the \emph{covariance} between $p_i$ and $p_j$ is
\begin{equation*}
    \text{cov}(p_i, p_j)=\frac{1}{n}\sum_{k=1}^n(p_{i,k}-\bar{p}_i)(p_{j,k}-\bar{p}_j),
\end{equation*}
where $\bar{p_i}=\frac{1}{n}\sum_{k=1}^n p_{i,k}$.
Since the covariance function is polynomial on the entries, the set
\[
\mathcal{D}_{c,n} =\left\{(p_1,\hdots,p_c)\in\mathbb R^{cn}:
\text{$\text{cov}(p_i, p_j)\neq 0$ for all $i,j\in\{1,\hdots, c\}$}\right\}
\]
is open and dense in $\mathbb{R}^{cn}$.

Suppose given a continuous function $d\colon\mathbb R^n\times\mathbb R^n\to \mathbb R_{\geq 0}$ such that $d(x,y)=d(y,x)$ for all $x$ and~$y$ and $d(x,x)=0$ for every~$x$ ---we call such a function a \emph{dissimilarity}.
The \emph{persistence diagram} of $X=\{p_1,\dots,p_c\}$ in homological dimension $k$ with respect to the given dissimilarity $d$ is an unordered set of points $\{(b_i,d_i)\}_{i\in I}$ in $\mathbb{R}^2$ where $b_i$ is the birth parameter and $d_i$ is the death parameter (possibly infinite) of an element in a full set $I$ of linearly independent generators of $k$th simplicial homology $H_k(V_t(X,d))$ of the \emph{Vietoris--Rips filtered simplicial complex} $\{V_t(X,d)\}_{t\ge 0}$. This simplicial complex has an $m$-simplex at level $t$ for every collection of points $p_{i_0},\dots,p_{i_m}$ in $X$ such that $d(p_{i_r},p_{i_s})\le t$ for $i_r,i_s\in\{0,\dots,m)$. By convention, persistence diagrams include all points in the diagonal $\Delta^{\infty}=\{(x,x):x\ge 0\}$ with infinite multiplicity. Simplicial homology is computed with coefficients in any field.

A persistence diagram in homological dimension $k$ can be viewed as a function $B_k={\rm Dgm}_k\circ F$ defined on $\mathbb{R}^{cn}$ taking values in the set $\mathbf{Diag}$ of families of points with multiplicities in the upper quadrant of~$\mathbb{R}^2$ extended with points at infinity: \begin{equation}
\label{eqn:generating_persistence_diagram}
\mathbb R^{cn} \xrightarrow[]{\hspace{0.4cm}F\hspace{0.4cm}}
\mathbb R^K\xrightarrow[]{\hspace{0.2cm}\text{Dgm}_k\hspace{0.2cm}}\mathbf{Diag}.
\end{equation}
Here we denote by $\mathbb{R}^K$ the set of all functions $f\colon K\to\mathbb{R}$, where $K$ is the collection of nonempty subsets of $\{1,\dots, c\}$, which we view as faces of a $(c-1)$-dimensional simplex. The function $F$ is defined as 
\[
F(X)(\sigma)=\max_{i,j\in\sigma}\,d(p_i, p_j)
\]
for $\sigma\in K$ and $X=\{p_1,\dots,p_c\}$. The function $\text{Dgm}_k$ assigns to each  function $f\colon K\to\mathbb{R}$ the corresponding Vietoris--Rips persistence diagram in homological dimension~$k$, where $f$ is treated as a filtering function on the faces of a $(c-1)$-dimensional simplex.

Differentiability of functions valued in $\mathbf{Diag}$ is defined in \cite{Leygonie2022} as follows. 
For $m,\ell\in\mathbb{Z}_{\ge 0}$, consider the quotient map $Q_{m,\ell}\colon\mathbb R^{2m}\times\mathbb R^\ell\to\mathbf{Diag}$ sending each point
\[
\tilde{D}=(x_1,y_1,\hdots,x_m, y_m, z_1,\hdots, z_\ell)\in\mathbb R^{2m}\times\mathbb R^\ell
\]
to the diagram obtained by forgetting the order of the points:
\begin{equation*}
    Q_{m,p}(\tilde{D})=\left\{(x_i, y_i)\right\}_{i=1}^m\cup\left\{(z_j,\infty)\right\}_{j=1}^\ell\cup\Delta^\infty.
  \end{equation*}

Let $\mathcal{M}$ be a smooth manifold and let
$B\colon \mathcal{M}\to\mathbf{Diag}$ be any map. For $x\in\mathcal{M}$ and $r\in\mathbb{Z}_{\geq 0}\cup\{\infty\}$, the map $B$ is said to be \emph{$r$-differentiable} at $x$ if there exists an open neighborhood $U$ of $x$ and there exist integers $m,\ell\in\mathbb{Z}_{\geq 0}$ and a map $\tilde{B}\colon U\to\mathbb R^{2m}\times\mathbb R^\ell$ of class $\mathcal{C}^r$ such that $B=Q_{m,\ell}\circ \tilde{B}$ on~$U$.
Similarly, for a smooth manifold $\mathcal{N}$, a map 
$V\colon\mathbf{Diag}\to\mathcal{N}$ is said to be \emph{$r$-differentiable} at a diagram~$D$, where $r\in\mathbb{Z}_{\geq 0}\cup\{\infty\}$, if for all $m,\ell\in\mathbb Z_{\geq 0}$ and all $\tilde{D}\in\mathbb{R}^{2m}\times\mathbb{R}^{\ell}$ such that $Q_{m,\ell}(\tilde{D})=D$ the map $V\circ Q_{m,\ell}\colon \mathbb R^{2m}\times\mathbb R^\ell\to\mathcal N$ is $\mathcal{C}^r$ on an open neighborhood of $\tilde{D}$.

As proved in \cite[Proposition~3.14]{Leygonie2022},
if a function $B\colon\mathcal M\to\mathbf{Diag}$ is $r$-differentiable at $x\in\mathcal{M}$ and another function $V\colon\mathbf{Diag}\to\mathcal{N}$ is $r$-differentiable at $B(x)$, then $V\circ B\colon\mathcal{M}\to\mathcal{N}$ is $\mathcal{C}^r$ at $x$ as a map between smooth manifolds.

In what follows, we consider the projections for $i,j\in\{1,\dots, c\}$,
\begin{equation*}
\pi_{i,j}\colon \mathbb{R}^{cn}\to
\mathbb{R}^{n}\times \mathbb{R}^{n},
\qquad
\pi_{i,j}(p_1,\dots,p_c) =(p_i, p_j).
\end{equation*}

\begin{proposition}\label{prop:diff_with_arbitrary_distances}
    Let $d\colon\mathbb{R}^{n}\times \mathbb{R}^{n}\to \mathbb{R}$ be a dissimilarity which is $\mathcal{C}^r$ on an open set $U\subseteq\mathbb{R}^n\times\mathbb{R}^n$, where $r\ge 0$. Let $p=(p_1,\dots,p_c)\in\mathbb{R}^{cn}$ such that $p\in \pi^{-1}_{i,j}(U)$ for all $i,j\in\{1,\hdots,c\}$. Suppose that
    $d(p_i,p_j)\neq d(p_k, p_l)$ when $\{i,j\}\neq\{k,l\}$, where $i,j,k,l\in\{1,\hdots,c\}$.
    Then the function $B_k=\textnormal{Dgm}_k\circ F$ defined in~{\rm \eqref{eqn:generating_persistence_diagram}} is $r$-differentiable at~$p$.
\end{proposition}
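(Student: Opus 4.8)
The plan is to exploit the factorisation $B_k=\textnormal{Dgm}_k\circ F$ from~\eqref{eqn:generating_persistence_diagram}: I would first show that $F$ is $\mathcal{C}^r$ on a neighbourhood of $p$, then produce a $\mathcal{C}^r$ local lift of $\textnormal{Dgm}_k$ at $F(p)$, and finally compose. For the first part, fix $\sigma\in K$. If $|\sigma|=1$ then $F(X)(\sigma)=0$ is constant, and if $|\sigma|\ge 2$ then $F(X)(\sigma)=\max_{i,j\in\sigma}d(p_i,p_j)$, where the maximum is attained off the diagonal since the hypothesis forces $d(p_i,p_j)\neq 0$ for $i\neq j$. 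Because the finitely many numbers $d(p_i,p_j)$ with $i\neq j$ are pairwise distinct at $p$, there is a unique maximising edge $\{i_\sigma,j_\sigma\}\subseteq\sigma$ at $p$; since $d$ is continuous on $U$ and $(p_i,p_j)\in U$ for every $i,j$, that same edge remains the maximiser on a whole neighbourhood of $p$. Intersecting these finitely many neighbourhoods and shrinking once more so that each projection $\pi_{i_\sigma,j_\sigma}$ maps it into $U$, I obtain an open $W\ni p$ on which $F(\cdot)(\sigma)=d\circ\pi_{i_\sigma,j_\sigma}$; this is $\mathcal{C}^r$ because $d|_U$ is $\mathcal{C}^r$ and $\pi_{i_\sigma,j_\sigma}$ is linear. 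Hence $F$ is $\mathcal{C}^r$ on $W$.

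The second step is to show that the simplexwise birth--death pairing underlying $\textnormal{Dgm}_k$ is locally constant at $f_0=F(p)$. Every $F(X)$ is monotone for face inclusion, so $F(W)$ lies in the domain on which $\textnormal{Dgm}_k$ is the Vietoris--Rips persistence map, and the only coincidences among the values $\{f_0(\sigma)\}_{\sigma\in K}$ are forced: $f_0(\sigma)=f_0(\tau)$ can happen only when $\sigma$ and $\tau$ share the same maximising edge, and in that case $F(X)(\sigma)=F(X)(\tau)$ holds \emph{identically} on $W$ (both equal $F(X)$ of that edge, by the first part). Consequently, after shrinking $W$ so that every strict inequality $f_0(\sigma)<f_0(\tau)$ persists, the preorder that $F(X)$ induces on the simplices of the $(c-1)$-simplex is the same for all $X\in W$; fixing once and for all a refinement of it to a total order compatible with the face order, the persistence algorithm is run on one and the same ordered boundary matrix for all $X\in W$, so the pairing --- and hence the finite list of birth/death simplices whose $f$-values realise the dimension-$k$ diagram --- is fixed on $W$. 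Letting $\tilde B(X)$ be the tuple that collects the corresponding coordinates $F(X)(\sigma^{\mathrm b}_i)$, $F(X)(\sigma^{\mathrm d}_i)$ and $F(X)(\sigma^{\mathrm b}_j)$, the first step shows $\tilde B$ is $\mathcal{C}^r$ on $W$, and $B_k=Q_{m,\ell}\circ\tilde B$ there, which is exactly the required local lift. One may equally phrase the two halves as ``$F$ is $\mathcal{C}^r$'' and ``$\textnormal{Dgm}_k$ is $r$-differentiable at $f_0$'' and invoke the fact, immediate from the definition and parallel to~\cite[Proposition~3.14]{Leygonie2022}, that precomposing an $r$-differentiable $\mathbf{Diag}$-valued map with a $\mathcal{C}^r$ map is again $r$-differentiable.

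The main obstacle is this second step: a Vietoris--Rips filtration is never injective on its simplices --- every top-dimensional simplex ties with its longest edge, and all vertices enter at parameter $0$ --- so one cannot simply appeal to a blanket genericity statement about $\textnormal{Dgm}_k$. The distinct-pairwise-distances hypothesis is precisely what rigidifies these unavoidable ties: they stem only from a shared maximising edge and therefore move together under perturbations of $X$, so that no genuine reordering of the simplices, and thus no jump in the pairing, can occur near $p$. I would make this rigorous via the argmax/continuity argument above, or alternatively cite the corresponding Vietoris--Rips genericity result of~\cite{Leygonie2022}. The remaining points are routine bookkeeping: that $W$ can be chosen to stabilise simultaneously the finitely many argmaxes and the finitely many strict simplex inequalities, and that the count of off-diagonal versus essential points of the diagram is itself locally constant, since whether $F(X)(\sigma^{\mathrm b})$ equals or lies strictly below $F(X)(\sigma^{\mathrm d})$ is decided by whether $\sigma^{\mathrm b}$ is the maximising edge of $\sigma^{\mathrm d}$, which is locally constant on $W$.
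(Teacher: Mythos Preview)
Your proposal is correct and follows essentially the same route as the paper: both arguments observe that the distinct--pairwise--distance hypothesis forces the preorder that $F(X)$ induces on the simplices of $K$ to be locally constant near $p$ (via continuity of the $d\circ\pi_{i,j}$), and then deduce $r$-differentiability of $B_k$. The only difference is packaging: the paper immediately invokes \cite[Theorem~4.7]{Leygonie2022} once the preorder is shown to be constant on a neighbourhood, whereas you spell out the content of that theorem by hand (fixing the ordered boundary matrix, extracting the birth/death simplices, and writing the lift $\tilde B$ explicitly) --- and you yourself note this citation as the alternative.
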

\begin{proof}
Since     $d(p_i,p_j)\neq d(p_k, p_l)$ for $\{i,j\}\neq\{k,l\}$, the values $d(p_i,p_j)$ for $i\neq j$ are strictly ordered. As the projections $\pi_{i,j}$ are $\mathcal{C}^\infty$ and $d$ is $\mathcal{C}^r$ on $U$, and $\pi_{i,j}(p)\in U$, we infer that $d\circ\pi_{i,j}$ is $\mathcal{C}^r$ in $p$. 
Since, in particular, $d\circ\pi_{i,j}$ is continuous, there is a neighbourhood $U'$ of $p$ where the order of the values $d(p'_i,p'_j)$ remains the same. 
Then $F(p)$ and $F(p')$ induce the same preorder on the set of simplices of $K$ for every $p'\in U\cap \bigcap_{i,j}\pi_{i,j}^{-1}(U)$.
Hence, the hypotheses of \cite[Theorem~4.7]{Leygonie2022} hold and therefore $B_k$ is $r$-differentiable at~$p$.
\end{proof}

In what follows, we denote, for a given dissimilarity $d$,
\[
\mathfrak{D}_{c,n} =\left\{(p_1,\hdots, p_c)\in\mathcal{D}_{c,n}: \text{$d(p_i, p_j)\neq d(p_k, p_l)$ if $\{i,j\}\neq\{k,l\}$}\right\}.
\]
We note that, if the dissimilarity $d(x,y)=1-|{\rm corr}(x,y)|$ is chosen, then $\mathfrak{D}_{c,n}$ is an open dense subset of $\mathbb{R}^{cn}$, since $d(p_i,p_j)=d(p_k,p_l)$ precisely when $|{\rm corr}(p_i,p_j)|=|{\rm corr}(p_k,p_l)|$, and the square of correlation is a rational function.

\begin{proposition}
\label{prop:Proposition2}
  Let $c,n\geq 2$, $k\in\mathbb{Z}_{\geq 0}$, and $d\colon\mathbb R^n\times\mathbb R^n\to[0,1]$ be the dissimilarity given by $d(x,y)=1-|{\rm corr}(x, y)|$. The function $B_k={\rm Dgm}_k\circ F$ defined in~{\rm \eqref{eqn:generating_persistence_diagram}} is $\infty$-differentiable on $\mathfrak D_{c,n}$.
\end{proposition}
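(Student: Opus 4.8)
The plan is to deduce the statement directly from Proposition~\ref{prop:diff_with_arbitrary_distances} with $r=\infty$. The one point requiring genuine care is to produce an open set $U\subseteq\mathbb R^n\times\mathbb R^n$ on which the dissimilarity $d(x,y)=1-|{\rm corr}(x,y)|$ is $\mathcal C^\infty$ and which contains every pair $(p_i,p_j)$ arising from a point of $\mathfrak D_{c,n}$; once this is in place, the rest is a direct verification of hypotheses.

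First I would take
\[
U=\bigl\{(x,y)\in\mathbb R^n\times\mathbb R^n : {\rm cov}(x,y)\neq 0\bigr\}.
\]
Since ${\rm cov}$ is a polynomial in the coordinates of $x$ and $y$, the set $U$ is open. To see that $d$ is $\mathcal C^\infty$ on $U$, note that by the Cauchy--Schwarz inequality ${\rm cov}(x,y)^2\le {\rm cov}(x,x)\,{\rm cov}(y,y)$, so ${\rm cov}(x,y)\neq 0$ forces ${\rm cov}(x,x)>0$ and ${\rm cov}(y,y)>0$. Hence on $U$ the correlation ${\rm corr}(x,y)={\rm cov}(x,y)\bigl({\rm cov}(x,x)\,{\rm cov}(y,y)\bigr)^{-1/2}$ is a ratio of $\mathcal C^\infty$ functions with a nowhere-vanishing, strictly positive denominator, and moreover ${\rm corr}(x,y)\neq 0$ throughout $U$. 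Since $t\mapsto|t|$ is $\mathcal C^\infty$ on $\mathbb R\setminus\{0\}$, the composite $(x,y)\mapsto|{\rm corr}(x,y)|$, and therefore $d=1-|{\rm corr}|$, is $\mathcal C^\infty$ on $U$.

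Next I would fix an arbitrary $p=(p_1,\dots,p_c)\in\mathfrak D_{c,n}$ and check the hypotheses of Proposition~\ref{prop:diff_with_arbitrary_distances} for this $U$ and $r=\infty$. Because $\mathfrak D_{c,n}\subseteq\mathcal D_{c,n}$, we have ${\rm cov}(p_i,p_j)\neq 0$ for all $i,j\in\{1,\dots,c\}$, i.e.\ $\pi_{i,j}(p)=(p_i,p_j)\in U$ for every pair; and by the defining condition of $\mathfrak D_{c,n}$ we have $d(p_i,p_j)\neq d(p_k,p_l)$ whenever $\{i,j\}\neq\{k,l\}$. Proposition~\ref{prop:diff_with_arbitrary_distances} then yields that $B_k={\rm Dgm}_k\circ F$ is $\infty$-differentiable at $p$. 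As $p$ was an arbitrary point of $\mathfrak D_{c,n}$, this shows $B_k$ is $\infty$-differentiable on all of $\mathfrak D_{c,n}$, which is the claim.

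I expect the main obstacle to be exactly the smoothness of $d$ on $U$: one has to notice that the single condition ${\rm cov}(x,y)\neq 0$ simultaneously rules out vanishing of the standard deviations in the denominator of ${\rm corr}$ (via Cauchy--Schwarz) and keeps ${\rm corr}$ bounded away from $0$, so that the absolute value does not spoil differentiability. Everything else --- openness of $U$, the membership $\pi_{i,j}(p)\in U$, and the distinct-distances condition --- is immediate from the definitions of $\mathcal D_{c,n}$ and $\mathfrak D_{c,n}$ and feeds straight into Proposition~\ref{prop:diff_with_arbitrary_distances}.
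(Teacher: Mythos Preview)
Your proposal is correct and follows essentially the same route as the paper: both arguments reduce to Proposition~\ref{prop:diff_with_arbitrary_distances} by checking that $d$ is $\mathcal C^\infty$ where the covariances are nonzero and that the distinct-distance and membership conditions hold on $\mathfrak D_{c,n}$. If anything, your version is slightly more careful, since you make explicit the Cauchy--Schwarz step showing that ${\rm cov}(x,y)\neq 0$ forces ${\rm cov}(x,x),{\rm cov}(y,y)>0$, which the paper leaves implicit.
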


\begin{proof}
  Take $p=(p_1,\hdots, p_c)\in\mathfrak{D}_{c,n}$. By the definition of $\mathfrak D_{c,n}$, we have that $d(p_i,p_j)\neq d(p_k, p_l)$ for all $\{i,j\}\neq\{k,l\}$.
  Furthermore, the correlation is well-defined and $\mathcal{C}^\infty$ on $\pi_{i,j}(p)=(p_i, p_j)$ for all $i,j\in\{1,\hdots, c\}$, because $\text{cov}(p_i, p_j)\ne 0$ for points in $\mathfrak{D}_{c,n}$. We also have that $|\text{corr}(x,y)|$ is $\mathcal{C}^\infty$ on every $(p_i,p_j)$, since the absolute value function is $\mathcal{C}^\infty$ on $\mathbb R\smallsetminus\{0\}$. Therefore, $d$ is $\mathcal{C}^\infty$ on $\pi_{i,j}(p)=(p_i, p_j)$ for all $i,j\in\{1,\hdots, c\}$ and thus the assumptions of Proposition~\ref{prop:diff_with_arbitrary_distances} hold, implying that $B_k$ is $\infty$-differentiable on $\mathfrak D_{c,n}$.
\end{proof}

\begin{proof}[Proof of Theorem ~{\rm\ref{thm:differentiability_regularisation_terms}}]
By
Proposition~\ref{prop:Proposition2}, the function $B_0$ is $\infty$-differentiable on $\mathfrak D_{c,n}$, where $B_0(X)$ is the persistence diagram of $X$ in homological dimension zero. Therefore, we only need to display functions $\tilde{\mathcal{T}}_i\colon\mathbf{Diag}\to\mathbb R$ that are $\infty$-differentiable on $B_0(\mathfrak D_{c,n})$ such that $\mathcal{T}_i = \tilde{\mathcal{T}}_i\circ B_0$ for $i\in\{1,2\}$.

Here we view zero-dimensional persistence diagrams as consisting of points $(0,y)$, although we keep denoting them in the general form $(b,d)$.
When computing the average persistence and standard deviation of persistence of the points in $B_0(X)$, the number of points in the diagram is assumed to be equal to the number of edges of a minimum spanning tree for $(X,d)$, that is, $|X|-1$.
Therefore, the functions $\tilde{\mathcal{T}}_i$ can be defined as
\begin{equation*} \tilde{\mathcal{T}}_1(D) 
\triangleq 
\frac{1}{c-1}\sum_{(b,d)\in D^*}\!\!(d-b),
\qquad
\tilde{\mathcal{T}}_2(D) \triangleq -\alpha \tilde{\mathcal{T}}_1(D) + \beta\sigma(D),
\end{equation*}   
where 
\begin{equation*}
\sigma^2(D)\triangleq\frac{1}{c-1}\sum_{(b,d)\in D^*}((d-b)-\tilde{\mathcal{T}}_1(D))^2
\end{equation*}
with $D^*=\left\{(b,d)\in D:  d<\infty\right\}$. Points in the diagonal $\Delta^\infty$ are sent to zero by $d-b$ and are not taken into consideration in the sums, and neither are points at infinity.

In order to prove that the functions $\tilde{\mathcal{T}}_i$ are $\infty$-differentiable, take any $m,\ell\in\mathbb Z_{\geq 0}$ and $\tilde{D}\in\mathbb R^{2m}\times\mathbb R^\ell$ such that $Q_{m,\ell}(\tilde{D})=D$. 
If we write $\tilde{D}=(x_1,y_1,\hdots, x_m, y_m, z_1, \hdots, z_\ell)$, then the functions $\tilde{\mathcal{T}}_i\circ Q_{m,\ell}$ are given by
\begin{equation*}
\tilde{\mathcal{T}}_1( Q_{m,\ell}(\tilde{D})) = 
\frac{1}{c-1}\sum_{i=1}^m(y_i-x_i),\qquad
\tilde{\mathcal{T}}_2( Q_{m,\ell}(\tilde{D})) = -\alpha\tilde{\mathcal{T}}_1(Q_{m,\ell}(\tilde{D})) + \beta\sigma(\tilde{D}),
\end{equation*}
where 
\begin{equation*}
\sigma^2(\tilde{D})\triangleq\frac{1}{c-1}\sum_{i=1}^m[(y_i-x_i)-\tilde{\mathcal{T}}_1(Q_{m,\ell}(\tilde{D}))]^2.
\end{equation*}
      
The functions $\mathcal{\tilde{T}}_i\circ Q_{m,\ell}$ are $\mathcal{C}^\infty$ on all their domain because they are compositions of $\mathcal{C}^\infty$ functions on a neighborhood of $\tilde{D}$. The only function that is not $\mathcal{C}^\infty$ in all its domain is the square root function, which is not differentiable at zero. However, for points $p\in\mathfrak D_{c,n}$ we have pairwise different distances, and consequently the persistence diagram $B_0(X)$ contains at least two different points, making $\sigma(\tilde{D})\neq 0$ for a neighbourhood $U$ of $\tilde{D}$ and thus making $\sigma(\tilde{D})$ a $\mathcal{C}^\infty$ function on~$U$. Hence, $\tilde{\mathcal T}_i$ is $\infty$-differentiable. Therefore, as $B_0$ and $\tilde{\mathcal{T}}_i$ are $\infty$-differentiable in $\mathfrak D$ and $B_0(\mathfrak D_{c,n})$ respectively, 
the functions $\mathcal{T}_i$ are $\mathcal{C}^\infty$ on $\mathfrak D_{c,n}$.
\end{proof}

\end{document}